\documentclass[letterpaper, 10 pt, conference]{ieeeconf}  

\usepackage{fancyhdr}
\newcommand{\mytitle}{\textbf{Accepted final version.}
To appear in the \textit{Proceedings of the 65th IEEE Conference on Decision and Control}.\\
\copyright 2026 IEEE. Personal use of this material is permitted. Permission from IEEE must be obtained for all other uses, in any current or future media, including reprinting/republishing this material for advertising or promotional purposes, creating new collective works, for resale or
redistribution to servers or lists, or reuse of any copyrighted component of this work in other works.}
\IEEEoverridecommandlockouts                              
\usepackage{graphics} 
\usepackage{amsmath} 
\usepackage{amssymb}  
\usepackage{amsfonts}
\usepackage{mathtools}
\usepackage{bbm}
\usepackage{xcolor}
\usepackage{thmtools}
\usepackage{thm-restate}
\usepackage{siunitx}
\usepackage{multirow}
\usepackage{hyperref}
\usepackage{algorithm}
\usepackage{algpseudocode}
\usepackage{tikz}
\usepackage{wrapfig}
\usepackage{booktabs}
\usetikzlibrary{shapes.geometric, arrows.meta}

\DeclareSIUnit[quantity-product = ]\percent{\char`\%}

\hypersetup{
    colorlinks=true,
    citecolor=blue,
    linkcolor=blue,
    urlcolor=blue
}

\newtheorem{assumption}{Assumption}

\newtheorem{remark}{Remark}

\everymath=\expandafter{\the\everymath\displaystyle}

\title{
{\LARGE \bf Computationally efficient safe exploration\\in reinforcement learning}
}

\author{{Shreeram Murali, Shankar A.\ Deka, and Dominik Baumann}
\thanks{Supported by the Research Council of Finland flagship programme: the Finnish Center for Artificial Intelligence (FCAI) and the Tandem Industry Academia Seed funding from the Finnish Research Impact Foundation.}
\thanks{S.~Murali and D.~Baumann are with the Cyber-physical Systems Group; S.~A.~Deka is with the Nonlinear Systems and Control Group, Aalto University, Espoo, Finland (Email: {\tt\small firstname.lastname@aalto.fi}).}%
}

\begin{document}

\maketitle
\thispagestyle{fancy}
\pagestyle{fancy}

\begin{abstract}

Reinforcement learning in real-life applications requires safety guarantees during exploration. Typical reinforcement learning algorithms do not provide such guarantees, and many modifications that do rely on Gaussian processes (GPs), which have a large computational cost. We propose a computationally lightweight algorithm based on the Nadaraya-Watson estimator that safely explores and optimizes constrained Markov decision processes (MDPs). Our algorithm, \textsc{CoLSafe-MDP}, uses an estimator that scales in constant-time with bounds on the estimates, a significant improvement from its GP-based counterparts that scale cubically with the number of data points. We then evaluate its performance in a grid-based environment and on observational Martian terrain data.

\end{abstract}

\section{INTRODUCTION}

Suppose a planetary rover on the surface of Mars is tasked with collecting geological samples in an unfamiliar landscape. The rover must navigate terrain in which the scientific value of each location and the safety of the terrain are unknown \textit{a priori}. Furthermore, the rover operates under strict computational resource constraints: onboard processing is limited, and decisions are made in real-time. This example captures the core challenge we address in this paper: how can an agent safely explore an unknown environment and learn a near-optimal policy with limited computational resources?

Standard reinforcement learning (RL) has been widely successful in recent years, especially in structured environments such as video and board games \cite{silver_mastering_2016}. However, typical RL algorithms often disregard safety entirely in favour of trial-and-error exploration. This can be potentially catastrophic in applications such as autonomous driving, robotics, and space exploration, where people and property are involved \cite{pmlr-v80-fazel18a, koppejan_neuroevolutionary_2011}. It is therefore crucial that safety is maintained not just after convergence, but throughout exploration. 

A natural formalism for including safety constraints into RL is through constrained Markov decision processes (CMDPs) \cite{altman_constrained_2021, turchetta_safe_2016}. Here, the agent must satisfy a set of constraints while maximizing some cumulative reward. While this is a strong framework, most existing work provides theoretical guarantees only after policy convergence \cite{ding_last-iterate_2023} or ensure zero constraint violation only in expectation \cite{pmlr-v258-kalagarla25a}. Neither is sufficient when even a single safety violation can lead to damage. Ensuring safety with high probability at every step during learning is a substantially harder problem.


The safe Bayesian optimization literature addresses this problem. \textsc{SafeOpt}~\cite{sui2015SafeExplorationOptimization} and its extensions~\cite{fiedler2024SafetySafeBayesian, berkenkamp_safe_2017} use Gaussian processes to construct confidence intervals over unknown constraint functions and only evaluate inputs that are safe with high probability. However, \textsc{SafeOpt} is essentially a bandit algorithm: it can evaluate any input it certifies as safe, regardless of which inputs it evaluated before. In an MDP, the agent can only reach states through the environment dynamics, so a safe state is useful only if a safe path leads to it and away from it. Prior work~\cite{turchetta_safe_2016, wachi2018SafeExplorationOptimization} extends GP-based safe exploration to this setting by accounting for reachability and returnability, and \textsc{SNO-MDP}~\cite{wachi2020SafeReinforcementLearning} further provides near-optimality guarantees. All of these methods, however, rely on GP inference, whose posterior updates scale cubically with the number of observations~\cite{rasmussen2008Gaussianprocessesmachine}. This limits them to small state spaces or short horizons.

\subsubsection{Contributions} Therefore, in this paper, we propose \textsc{CoLSafe}-MDP, an algorithm for safe exploration and near-optimal policy learning in constrained MDPs that is both computationally lightweight and theoretically grounded. Our contributions are as follows:

\begin{itemize}
    \item We introduce a recursive formulation of the Nadaraya-Watson estimator \cite{nadaraya-1964, watson1964SmoothRegressionAnalysis} for CMDPs that exploits the finite state space and compact kernel support to achieve constant-time updates per observation, a significant improvement from the cubic scaling of GP-based methods.
    \item We derive confidence intervals for the estimator and use them to construct pessimistic and optimistic safe sets. 
    \item On the basis of these, we prove that \textsc{CoLSafe-MDP} does not violate the safety constraint with high probability and converges to a near-optimal policy in finite time. Furthermore, our theoretical guarantees require only Lipschitz continuity of the unknown functions (with a known constant), a milder assumption than~\cite{wachi2020SafeReinforcementLearning}. 
    \item We evaluate \textsc{CoLSafe-MDP} on synthetic grid worlds and on real Martian terrain data. The algorithm discovers nearly three times as many safe states as the GP-based baseline while running at a fraction of the wall-clock time, all without a single safety violation.
\end{itemize}

\subsubsection{Related work} There are many approaches to incorporating safety in RL \cite{garcia_comprehensive_2015}. Model-based approaches associate safety with state constraints to guarantee stability~\cite{berkenkamp_safe_2017, fisac_probabilistically_2018}, whereas model-free approach such as policy optimization~\cite{achiam} and Lyapunov-based methods \cite{chow_lyapunov-based_2018} guarantee constraint satisfaction only upon convergence or in expectation. Recent policy gradient methods for CMDPs have strengthened these guarantees, with feasibility guarantees throughout learning \cite{ni_safe_2025}. These methods target large or continuous state-action spaces with parametrized policies, an adjacent problem setting to the one we consider in our work. 

When the environment is unknown, GP-based confidence intervals offer a principled way to certify safety. \textsc{SafeOpt}~\cite{sui2015SafeExplorationOptimization} introduced this in the bandit setting, and \cite{turchetta_safe_2016} extended this to finite MDPs. Later, \cite{wachi2018SafeExplorationOptimization} and \cite{wachi2020SafeReinforcementLearning} unified this into \textsc{SNO-MDP}, a two-phase algorithm that first explores safely and then optimizes the reward, with provable guarantees on safety and optimality. However, these methods still suffer from the cubic computational complexity of GPs. To overcome this, more recent approaches have used nonparametric methods, such as the Nadaraya-Watson estimator \cite{nadaraya-1964, watson1964SmoothRegressionAnalysis}, to replace GPs in \textsc{SafeOpt} \cite{baumann2024computationallylightweightsafe, baumann2025SafetyOptimalityLearningBased}. In~\cite{baumann2025SafetyOptimalityLearningBased}, the authors implement a version of the estimator that scales linearly in time. However, it is possible to improve this to sublinear computational complexity with some preprocessing and the use of sophisticated data structures~\cite{rao1996PAClearningfunctions, pmlr-v300-murali26a}. Moreover, these approaches and improvements are not targeted for MDP settings. In our work, we present a version suited to CMDPs that is recursive and scales with $\mathcal{O}(1)$ in time.

\section{Problem setting}
\label{sec:problem}

We consider an infinite-horizon CMDP as the tuple
\begin{align*}
    \mathcal M = \langle \mathcal S, \mathcal A, f, r, g, \gamma \rangle, 
\end{align*}
where $\mathcal S$ is a finite set of states $\{ s \}$, $\mathcal A$ is a finite set of actions $\{ a \}$, $f : \mathcal S \times \mathcal A \to \mathcal S$ is a known state transition function, $r: \mathcal S \to \mathbb [0, R_{\text{max}}]$ is the reward function, $g : \mathcal S \to \mathbb R$ is the safety constraint function, and $\gamma \in (0, 1)$ is a discount factor. In our setting, both the reward function $r$ and constraint function $g$ are unknown \textit{a priori}. At every time step $t \in \mathbb N$, the agent must be in a `safe' state defined as $g(s_t) \ge h$, where $h \in \mathbb R$ is the safety threshold.  

 The agent's behaviour is specified by a policy $\pi : \mathcal S \to \mathcal A$, which defines the action to take in each state. The value of a policy represents the expected discounted cumulative reward from a certain state $s_t$. We denote this value function as $V^{\pi}_{\mathcal M}$. Our objective is then to find the policy $\pi$ that
\begin{align*}
    \text{maximizes:}& \qquad V^{\pi}_{\mathcal M} = \mathbb E \left[ \sum_{\tau = 0}^{\infty} \gamma^\tau r(s_{t + \tau}) \, \middle| \, s_t \right], \\
    \text{subject to:}& \qquad  g(s_{t+\tau}) \ge h, \quad \forall \tau \in [0, \infty).
\end{align*}


Our problem is centred around estimating the unknown reward and safety functions $r$ and $g$ from data. We assume to receive noisy measurements of $r$ and $g$ after every experiment with a certain policy. Thus, we introduce an assumption about the nature of noise in our measurements. For notational convenience, we define a vector $m(s_t) = [r(s_t) \quad g (s_t)]^\top$ that stacks the unknown reward and safety functions. We denote its components as $m_0(s) = r(s)$ and $m_1(s) = g(s)$.
\begin{assumption}
    \label{ass:noise}
    At every $t$, we receive measurements $\tilde{m}(s_t)$ of the reward and safety functions as $\tilde{m}(s_t) = m(s_t) + \omega_t$, where $\omega_t$ is a zero-mean noise vector whose components are each conditionally $\sigma$-sub-Gaussian.
\end{assumption}

That is, we consider $\omega_t$ to be a two-dimensional real process adapted to the natural filtration $(\mathcal F _t)_{t \ge 0}$ defined by the measurement equation, where $\mathcal F_{t-1} = \sigma(s_{1:t}, a_{1:t-1}, \tilde{m}_{1:t-1})$ represents the $\sigma$-algebra generated by all states visited up to and including time $t$, together with the actions taken and measurements received up to time $t-1$. Since $r$ and $g$ are unknown \textit{a priori}, the agent cannot certify any state as safe at $t=0$, and its first action would be unconstrained. Thus, we require the following assumption.
\begin{assumption}
    \label{ass:safe-set}
    We assume the agent starts its exploration in an initial set of states $S_0 \subseteq \mathcal S$ that is known to be safe, i.e., $g(s) \ge h$ for all $s \in S_0$.
\end{assumption}

To overcome the difficulties associated with estimating the safety of the environment based on measurements, we invoke regularity assumptions on the unknown reward and safety functions $r$ and $g$.
\begin{assumption}
    \label{ass:lipschitz}
     The functions $r$ and $g$ are Lipschitz-continuous with a known Lipschitz constant $L < \infty$.
\end{assumption}

Invoking Lipschitz continuity to provide guarantees is very common in the control and safe learning literature~\cite{Magureanu:2014, brunke2022safe, baumann2025SafetyOptimalityLearningBased}. Such an assumption is closely tied with invoking an upper bound on the norm in a reproducing kernel Hilbert space \cite{fiedler2024SafetySafeBayesian}. However, unlike~\cite{wachi2020SafeReinforcementLearning}, we do not explicitly assume a bounded RKHS norm. Although these assumptions often go hand-in-hand~\cite{tokmak2025safeexplorationreproducingkernel}, the Lipschitz constant of a function can be more intuitive to estimate based on data~\cite{Strongin1973, huangSampleComplexityLipschitz}.

\section{Safe exploration algorithm}

In this section, we introduce our proposed algorithm for safe and optimal policy learning. 

\subsection{Nadaraya-Watson estimator}

The Nadaraya-Watson estimator is a nonparametric kernel regressor that produces the conditional expectation of a random variable~\cite{nadaraya-1964, watson1964SmoothRegressionAnalysis}. We use the standard form of this estimator to estimate $m$ as
\begin{align}
    \label{eq:nwe}
    \hat{m}_t(s) &:= \sum_{\tau=1}^{t} \frac{K_{\lambda}(s, s_\tau)}{\kappa_{t}(s)} \tilde{m}(s_\tau), 
\end{align}
where
\begin{align}
    \label{eq:nwe-kernel-defs}
    \kappa_{t}(s) &:= \sum_{\tau=1}^{t} K_{\lambda}(s, s_\tau), \nonumber \\
    K_{\lambda}(s, s_\tau) &:=  K\left(\frac{\|s - s_\tau \|}{\lambda}\right). 
\end{align}

Here, $K$ is the kernel function, $\lambda$ is the bandwidth parameter, and $\tilde m(s)$ are the measurements of the reward and constraint functions. In~\eqref{eq:nwe-kernel-defs} and in the remainder of this paper, we use $\| \cdot \|$ to denote the $L^2$ norm. 

To derive guarantees on the estimate produced by~\eqref{eq:nwe}, we now introduce an assumption on the nature of the kernel $K$. 

\begin{assumption}
    \label{ass:kernel}
    There exists a constant $0 < \chi_K < \infty$ such that the kernel $K : \mathbb{R}^d \to \mathbb{R}$ satisfies $\chi_K \le K(v) \le 1$ for all $\|v\| \le 1$, and $K(v) = 0$ for all $\|v\| > 1$.
\end{assumption}

Since the kernel function is user-defined, this assumption can be satisfied either by explicitly selecting a kernel that matches these conditions, for instance the box kernel, or by formulating one in line with this assumption. 

\subsection{Confidence intervals}
\label{sec:ci}

Next, for safety and optimality guarantees, we require bounds on how close each component of our estimate $\hat{m}_i(s)$ is to the true $m_i(s)$. 

\begin{restatable}{lemma}{lemmaone}
\label{lem:bounds}
Let Assumptions~\ref{ass:noise}--\ref{ass:kernel} hold. For every $\delta \in (0, 1)$, with probability at least $1-\delta$,
\begin{align*}
    | \hat{m}_i(s) - m_i(s) | \le \beta_t(s)
\end{align*}
simultaneously for all $i \in \{0, 1\}$, all time steps $t \ge 1$, and all states $s \in \mathcal{S}$. The confidence bound width $\beta_t(s)$ is defined as
\begin{align*}
    \beta_t(s) = 
    \begin{cases} 
      +\infty, & \text{if } \kappa_t(s) = 0, \\
      L\lambda + \frac{\sigma}{\kappa_t(s)} \varepsilon_t(s) & \text{otherwise,}
   \end{cases}
\end{align*}
where 
\begin{align*}
    \varepsilon_t(s) &= \sqrt{2 \ln\left( \frac{2|\mathcal{S}|}{\delta} \sqrt{1 + \nu_t(s)} \right)} \sqrt{1 + \nu_t(s)}, \\
    \nu_t(s) &= \sum_{\tau=1}^t K_\lambda^2(s, s_\tau). 
\end{align*}
\vspace{0.1em}
\end{restatable}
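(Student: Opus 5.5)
We split the error into a deterministic bias term and a stochastic noise term. When $\kappa_t(s)=0$ the bound is trivial, since $\beta_t(s)=+\infty$. Otherwise, substituting $\tilde m_i(s_\tau)=m_i(s_\tau)+\omega_{t,i}$ into~\eqref{eq:nwe} and using $\sum_\tau K_\lambda(s,s_\tau)/\kappa_t(s)=1$ gives
\begin{align*}
\hat m_i(s)-m_i(s) = \sum_{\tau=1}^t \frac{K_\lambda(s,s_\tau)}{\kappa_t(s)}\bigl(m_i(s_\tau)-m_i(s)\bigr) + \frac{1}{\kappa_t(s)}\sum_{\tau=1}^t K_\lambda(s,s_\tau)\,\omega_{\tau,i}.
\end{align*}

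\textbf{Bias term.} By Assumption~\ref{ass:kernel}, $K_\lambda(s,s_\tau)\neq 0$ only if $\|s-s_\tau\|\le\lambda$. On that support, Assumption~\ref{ass:lipschitz} gives $|m_i(s_\tau)-m_i(s)|\le L\lambda$. The weights are nonnegative and sum to one, so the first term is bounded by $L\lambda$ in absolute value.

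\textbf{Noise term.} We treat the numerator $S_t(s)=\sum_{\tau\le t}K_\lambda(s,s_\tau)\omega_{\tau,i}$ as a self-normalized martingale. The weight $K_\lambda(s,s_\tau)$ is determined by $s_\tau$, which is $\mathcal F_{\tau-1}$-measurable, while $\omega_{\tau,i}$ is conditionally $\sigma$-sub-Gaussian given $\mathcal F_{\tau-1}$. Hence, for any $\eta\in\mathbb R$, the process
\[
\exp\bigl(\eta S_t-\tfrac{\eta^2\sigma^2}{2}\nu_t(s)\bigr)
\]
is a nonnegative supermartingale. We then apply the method of mixtures (Abbasi-Yadkori et al., scalar case) with a standard Gaussian prior on $\eta$, equivalently a $\mathcal N(0,\sigma^{-2})$ prior on $\eta$ in the unnormalized form. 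Combined with Ville's maximal inequality, this gives, with probability at least $1-\delta'$ and uniformly over all $t\ge1$,
\[
|S_t(s)|\le \sigma\sqrt{2(1+\nu_t(s))\ln\bigl(\sqrt{1+\nu_t(s)}/\delta'\bigr)}.
\]
Choosing $\delta'=\delta/(2|\mathcal S|)$ makes the right-hand side exactly $\sigma\varepsilon_t(s)$. Dividing by $\kappa_t(s)$ gives the second part of $\beta_t(s)$.

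\textbf{Union bound.} We take a union bound over the two components $i\in\{0,1\}$ and the $|\mathcal S|$ states, since $\mathcal S$ is finite. This gives $2|\mathcal S|$ events, each failing with probability $\delta/(2|\mathcal S|)$, so everything holds simultaneously with probability at least $1-\delta$. Time-uniformity is already part of the mixture bound, so no union over $t$ is needed.

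\textbf{Main obstacle.} The delicate step is justifying the time-uniform self-normalized bound when the weights depend on the adaptively chosen trajectory. We must confirm that $K_\lambda(s,s_\tau)$ is predictable with respect to the filtration in which $\omega_\tau$ is sub-Gaussian. The paper's definition $\mathcal F_{\tau-1}=\sigma(s_{1:\tau},\dots)$ includes $s_\tau$, so this holds. We also need the scalar mixture calculation to produce precisely the stated constant $\sqrt{1+\nu_t}$, which requires normalizing the prior variance against $\sigma$.
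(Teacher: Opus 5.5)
Your proposal is correct and takes the paper's approach: the paper cites the pointwise, time-uniform bound $L\lambda + \frac{\sigma}{\kappa_t(s)}\sqrt{2\ln(\sqrt{1+\nu_t(s)}/\delta')}\sqrt{1+\nu_t(s)}$ from its reference and then applies the same union bound over the $2|\mathcal S|$ pairs $(s,i)$ with $\delta'=\delta/(2|\mathcal S|)$; you instead re-derive that pointwise bound yourself via the bias/noise split and the scalar method of mixtures. One wording fix: your mixing prior must be $\mathcal N(0,\sigma^{-2})$ on the unnormalized $\eta$ of your displayed supermartingale, since a standard Gaussian on that same $\eta$ gives a bound in terms of $1+\sigma^2\nu_t(s)$ rather than $\sigma^2(1+\nu_t(s))$; your closing remark shows you intend the former, and the calculation then gives exactly $\sigma\varepsilon_t(s)$.
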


The proof of this lemma builds upon \cite[Lem.~1]{baumann2025SafetyOptimalityLearningBased}. The confidence bound $\beta_t(s)$ holds uniformly over both components, all time steps, and the finite state space $\mathcal S$ via a union bound. We collect our proofs in Section~\ref{sec:proofs}. 

\subsection{Updating the estimator in constant time}
\label{sec:updating}

In its naive form, the estimator in~\eqref{eq:nwe} scales with $\mathcal O(t)$ in time since it iterates over $t$ points. Nevertheless, a specific quirk in our problem setting of constrained MDPs allows us to implement a recursive variant of the Nadaraya-Watson estimator that scales in constant-time, i.e., $\mathcal O(1)$.

In our setting, we make use of two properties to implement a recursive grid-based approach. First, since the agent operates on a finite set of states $\mathcal S$ (e.g., grids on a map), we know exactly what all the possible query points $s$ will be. Second, the kernel function drops to exactly $0$ after a certain distance controlled by the bandwidth parameter $\lambda$.

This discretized setting allows us to avoid recalculating the sum in~\eqref{eq:nwe} over all past $t$ observations. Instead, we can trade storage for speed by maintaining sufficient statistics for each state $s$ in our grid $\mathcal S$. For the estimator in~\eqref{eq:nwe}, these are $\theta_t \coloneqq \sum_{\tau = 1}^t K_\lambda (s, s_\tau) \tilde{m}(s_\tau)$, and $\kappa_t$ from~\eqref{eq:nwe-kernel-defs}. 

When a new observation $(s_{t+1}, m_{t+1})$ arrives, we only update the statistics $\theta_{t}(s)$ and $\kappa_t(s)$ for states within the compact kernel's bandwidth (i.e., where $\| s - s_{t+1} \| \le \lambda$) as
\begin{align}
    \label{eqn:constant-time-update}
    \theta_{t+1}(s) & \leftarrow \theta_{t}(s) + K_{\lambda}(s, s_{t+1}) m_{t+1}, \nonumber \\
    \kappa_{t+1} (s) & \leftarrow \kappa_t(s) + K_\lambda (s, s_{t+1}).
\end{align}

With this, we can update our estimate as
\begin{align}
    \label{eqn:constant-time-estimate}
    \hat{m}(s) = \frac{\theta_{t + 1}(s)}{\kappa_{t+1}(s)}.
\end{align}

For the confidence intervals, the only data-dependent term is $\nu_t(s)$, which can be recursively updated as 
\begin{align}
    \nu_{t+1}(s) \leftarrow \nu_t(s) + K_\lambda ^2 (s, s_{t+1}). 
\end{align}

\subsection{Pessimistic and optimistic safe sets}

With valid confidence intervals established, we now define two kinds of predicted safe spaces as inferred by our estimator: the predicted pessimistic and optimistic safe sets. These notions are similar to those in \cite{wachi2020SafeReinforcementLearning} and are inspired by \cite{turchetta_safe_2016, wachi2018SafeExplorationOptimization, turchetta_safe_2019}. The pessimistic safe set consists of states that satisfy our estimates of the safety constraint with high probability. The optimistic safe set contains any state that could potentially be safe. Furthermore, we filter this set to consider those states that are also reachable and returnable. The safety is evaluated on the basis of the bounds inferred by our estimator. 

At any step $t$, the Nadaraya-Watson estimator provides a valid confidence interval for the safety function $g(s)$ at state $s$, denoted as $Q_t(s) := [\hat{m}_{1}(s) - \beta_t(s), \hat{m}_{1}(s) + \beta_t(s)]$. To ensure that our safe set does not shrink over time, we intersect these intervals sequentially. We define $C_t(s) = Q_t(s) \cap C_{t-1}(s)$, initializing with $C_0(s) = [h, \infty)$ for all known safe states $s \in S_0$, and $C_0(s) = (-\infty, \infty)$ for all others. The tightened lower and upper bounds are denoted by $l_t(s) := \min C_t(s)$ and $u_t(s) := \max C_t(s)$. Rather than relying on computationally expensive GP posteriors, we construct these spaces using the recursive confidence bounds established in Lemma~\ref{lem:bounds}. Both safe spaces are initialized with the known safe seed from Assumption~\ref{ass:safe-set}, $\mathcal{X}_0^- = \mathcal{X}_0^+ = S_0$.

\subsubsection{Predicted pessimistic safe set}

We now define two sets for the probabilistic safety guarantee. First, we have $S_t^-$, the set of states that satisfy the safety constraint $h$ with high probability. By using the Lipschitz continuity (Assumption~\ref{ass:lipschitz}) of the safety function, we can certify states that have not yet been directly sampled but are sufficiently close to known safe states as
\begin{align*}
    S_t^- = \{s \in \mathcal{S} \mid \exists s' \in \mathcal{X}_{t-1}^- : l_t(s') - L \cdot \| s - s' \| \ge h\}.
\end{align*}
However, certifying a state as safe is insufficient if the agent cannot safely navigate to it or return from it. We enforce topological safety using reachability and returnability operators. Reachability ensures the agent can enter a newly certified state from its current safe set, while returnability ensures it can leave again. Let $R_{\text{reach}}(X)$ be the set of states reachable from a set $X$ in one step. Let $R_{\text{ret}}(X, \bar{X})$ denote the states in $X$ from which the agent can transition back to a target set $\bar{X}$ in one step. An $n$-step returnability operator is defined recursively as $R_{\text{ret}}^n(X, \bar{X}) = R_{\text{ret}}(X, R_{\text{ret}}^{n-1}(X, \bar{X}))$. The limit as $n \to \infty$, denoted $\bar{R}_{\text{ret}}(X, \bar{X})$, yields all states that can eventually return to $\bar{X}$ through $X$.

With this, the predicted pessimistic safe space $\mathcal{X}_t^-$ is thus defined as the subset of $S_t^-$ as
\begin{align}
    \label{eq:pessimistic_safeset_expansion}
    \mathcal{X}_t^- = \{s \in S_t^- \mid s \in R_{\text{reach}}(\mathcal{X}_{t-1}^-) \cap \bar{R}_{\text{ret}}(S_t^-, \mathcal{X}_{t-1}^-)\}.
\end{align}
Therefore, $\mathcal X_t^-$ is the subset of $\mathcal S_t^-$ consisting of states that are reachable from the previous safe space $\mathcal{X}_{t-1}^-$ and from which the agent can return to it. By restricting the policy to actions that transition exclusively within $\mathcal{X}_t^-$, the algorithm theoretically guarantees that the agent will not violate the safety threshold $h$.

\subsubsection{Predicted optimistic safe set}

To guide exploration, we must also identify states that have the potential to be safe. Using the upper bound of our confidence intervals, we define an optimistic safe set $S_t^+$ as
\begin{align*}
    S_t^+ = \{s \in \mathcal{S} \mid \exists s' \in \mathcal{X}_{t-1}^+ : u_t(s') - L \cdot \| s - s' \| \ge h\}.
\end{align*}
States outside of $S_t^+$ are deemed unsafe with high probability and are discarded from the search space. Thus, the predicted optimistic safe space $\mathcal{X}_t^+$ is
\begin{align}
    \mathcal{X}_t^+ = \{s \in S_t^+ \mid s \in R_{\text{reach}}(\mathcal{X}_{t-1}^+) \cap \bar{R}_{\text{ret}}(S_t^+, \mathcal{X}_{t-1}^+)\}.
\end{align}

In other words, $\mathcal{X}_t^+$ contains any state that could potentially be safe. The agent's goal during the exploration phase is to select paths within the pessimistic set $\mathcal{X}_t^-$ that gather observations to systematically convert elements of the optimistic set $\mathcal{X}_t^+$ into certified safe states. 

\subsection{The algorithm: \textsc{CoLSafe-MDP}}
\label{sec:algorithm}

We now introduce our proposed algorithm, \textsc{CoLSafe-MDP}, that efficiently achieves a near-optimal policy in finite time while guaranteeing safety. Algorithm~\ref{alg:algorithm} displays the pseudo-code overview. 


\begin{algorithm}[t]
\caption{\textsc{CoLSafe-MDP}}
\label{alg:algorithm}
\begin{algorithmic}[1]
\State \textbf{Input:} State space $\mathcal{S}$, transition function $f$, safe seed $S_0$, Lipschitz constant $L$, bandwidth $\lambda$, optimality bound $\bar{\beta}$, safety threshold $h$, desired confidence $\delta$
\State Initialize $(\theta_0, \kappa_0, \nu_0) \leftarrow 0$; $C_0(s) \leftarrow [h, \infty)\ \forall s \in S_0$
\Repeat
    \State Update predicted safe spaces $\mathcal{X}_t^-$ and $\mathcal{X}_t^+$
    \State Update set of potential expanders $G_t$ with \eqref{eq:expanders}
    \State Set $\mathcal T_t \leftarrow \mathcal G_t$ if $\mathcal G_t \neq \emptyset$, else $\mathcal X_t^-$
    \State Select target state $\xi \leftarrow \arg\max_{s \in \mathcal{T}_t} w_t(s)$
    \State Go to $\xi$ (via shortest path within $\mathcal{X}_t^-$), receive~$\tilde{m}(s_t)$
    \State Update $\hat{m}(s_t)$ using ($\theta_t, \kappa_t, \nu_t$)
\Until{$\max_{s \in \mathcal{X}_t^-} w_t(s) \le 2\bar{\beta}$}
\Loop
    \State Update optimistic reward with \eqref{eq:reward_update}
    \State Compute optimal value function $J^*(s_t)$ over $\mathcal{X}_t^-$
    \State Execute action $a_t \coloneqq \arg \max _{a} \mathbb E [J^\ast (s_t)]$
\EndLoop
\end{algorithmic}
\end{algorithm}

We split the algorithm into two phases. The first phase is dedicated entirely to the exploration of safety. To efficiently expand the pessimistic safe space $\mathcal{X}_t^-$, the agent identifies a set of \textit{expanders} $G_t$, defined as
\begin{align}
\label{eq:expanders}
G_t = \{s \in \mathcal{X}_t^- \mid \exists a \in \mathcal{A} : f(s, a) \in \mathcal{X}_t^+ \setminus \mathcal{X}_t^-\}.
\end{align}
These are states within $\mathcal{X}_t^-$ from which safety can potentially be extended to currently uncertified states via Assumption~\ref{ass:lipschitz}. We greedily select the state  $\xi \in G_t$ with the largest confidence interval width $w_t(s) = u_t(s) - l_t(s)$. This difference is the measure of uncertainty about $g$ at $s$, so we sample where our knowledge of the safety function is the weakest. If $G_t$ is empty, we default to reducing the highest uncertainty remaining within the currently certified safe space. While traversing to the target $\xi$, we receive new measurements $\tilde{m}(s_t)$ of the reward and safety functions and update the local statistics $(\theta_t, \kappa_t, \nu_t)$. This exploration phase terminates when the maximum uncertainty across the entire pessimistic safe space drops below a user-defined optimality bound $\bar{\beta}$.

In the second phase, we focus on exploiting the reward. Operating exclusively within the certified safe space $\mathcal{X}_t^-$, we define an optimistic reward function for a pair $(s, a)$ based on the upper confidence bound for the reward component $m_0$ at the successor state $s' \coloneqq f(s, a)$ as 
\begin{align}
    \label{eq:reward_update}
    U_t(s, a) = \min \Big( R_{\mathrm{max}}, u_t^r(s') \Big),
\end{align}
where $u^r_t(s) \coloneqq \hat{m}_{0,t}(s) + \beta_t(s)$ is the upper confidence bound on the reward from Lemma~\ref{lem:bounds}. 

We then compute the optimal value function $J^*(s_t)$ using the Bellman equation and execute the action $a_t$ that maximizes its expected value \cite{suttonbarto}. By updating the estimator with reward measurements while executing this optimistic policy, the agent reduces reward uncertainty. In the next section, we show that this converges to a near-optimal policy without breaching the safety threshold.

\subsection{Safety and optimality}
\label{sec:guarantees}


The safety of \textsc{CoLSafe-MDP} rests on the pessimistic safe set $\mathcal{X}_t^-$. By construction, we restrict the agent's transitions to states within $\mathcal{X}_t^-$, which are certified safe through the confidence bounds from Lemma~\ref{lem:bounds}. The following theorem formalizes this.

\begin{restatable}{theorem}{theoremone}
\label{thm:theorem1}
    Under Assumptions~\ref{ass:noise}--\ref{ass:kernel}, and following Algorithm~\ref{alg:algorithm}, $g(s_t) \ge h$ for all $t \ge 0$ with probability at least $1-\delta$.
\end{restatable}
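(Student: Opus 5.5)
We condition on the event $E$ of Lemma~\ref{lem:bounds}, which has probability at least $1-\delta$. On $E$ we have $|\hat m_{1,t}(s)-g(s)|\le\beta_t(s)$ for all $t$ and $s$ simultaneously. Every remaining step is deterministic on $E$, so the probability statement follows directly.

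The first step is to show that on $E$ the intersected intervals are valid, namely $g(s)\in C_t(s)$ for all $t\ge0$ and $s\in\mathcal S$. We argue by induction on $t$. At $t=0$ this holds because $C_0(s)=[h,\infty)$ for $s\in S_0$ by Assumption~\ref{ass:safe-set}, and $C_0(s)=\mathbb R$ otherwise. For the inductive step, $C_t(s)=Q_t(s)\cap C_{t-1}(s)$, and $g(s)\in Q_t(s)$ on $E$, so $g(s)\in C_t(s)$. In particular $C_t(s)\neq\emptyset$ and $l_t(s)\le g(s)$.

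The second step is to show, again by induction, that every state in $\mathcal X_t^-$ is safe. At $t=0$ we have $\mathcal X_0^-=S_0$, which is safe. Suppose $s\in\mathcal X_t^-\subseteq S_t^-$. Then there is some $s'\in\mathcal X_{t-1}^-$ with $l_t(s')-L\|s-s'\|\ge h$. Assumption~\ref{ass:lipschitz} together with the first step then gives
\[
g(s)\ge g(s')-L\|s-s'\|\ge l_t(s')-L\|s-s'\|\ge h .
\]
This argument does not even use the inductive hypothesis on $\mathcal X_{t-1}^-$. The same argument shows that all of $S_t^-$ is safe, which is needed for return paths that pass through $S_t^-$.

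The third step, which I expect to be the main obstacle, is to show that the agent's trajectory stays inside the certified sets. In the exploration phase the agent moves along shortest paths within $\mathcal X_t^-$. The exploitation phase optimizes over $\mathcal X_t^-$ using only transitions that remain inside it. I would first establish monotonicity, $\mathcal X_{t-1}^-\subseteq\mathcal X_t^-$. Since $l_t$ is nondecreasing because of the intersections, any $s\in\mathcal X_{t-1}^-$ lies in $S_t^-$ via the witness $s'=s$ when $l_t(s)\ge h$. Establishing that last inequality needs its own induction using the Lipschitz certificate, and this is the delicate point. Membership in the reachability operator $R_{\text{reach}}$ and the returnability operator $\bar R_{\text{ret}}$ is inherited by the same reasoning. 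With monotonicity in hand, the current state $s_t$ always lies in some $\mathcal X_{t'}^-\subseteq\mathcal X_t^-$, the next target and the path to it lie in $\mathcal X_t^-$, and by the second step every visited state satisfies $g(s_t)\ge h$.

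If monotonicity is awkward to establish from the definitions, I would instead argue directly. Each step takes the agent from a state in $\mathcal X_{t-1}^-$ to a state of $R_{\text{reach}}(\mathcal X_{t-1}^-)\cap\mathcal X_t^-$. The returnability condition guarantees that from any new state there is a path back through $S_t^-$, all of which is safe. The algorithm therefore never needs to leave the union of safe certified sets. Combining this with the second step gives the claim on $E$, and hence with probability at least $1-\delta$.
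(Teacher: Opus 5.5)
Your proposal is correct, and steps 1 and 2 are exactly the paper's proof. On the event $\mathcal{E}$ of Lemma~\ref{lem:bounds}, the nested intersections give $g(s')\in C_t(s')$, hence $g(s')\ge l_t(s')$. The Lipschitz certificate then transfers safety to every state of $S_t^-\supseteq\mathcal X_t^-$. The paper closes simply by asserting that the algorithm only moves within $\mathcal X_t^-$. Your explicit use of Assumption~\ref{ass:safe-set} to get $g(s)\in C_0(s)$ on $S_0$ is a detail the paper leaves implicit.

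One caution about your step 3. The monotonicity route via the witness $s'=s$ would fail, because $l_t(s)\ge h$ is false in general for a state certified only through a neighbour. That state's own interval carries the bias $L\lambda$ plus noise. It can even be unbounded below, for example when $s\notin S_0$ and $\kappa_t(s)=0$. If you want monotonicity, reuse the original witness $s'$, which stays in the set by induction. Its lower bound can only increase, since $C_t(s')\subseteq C_{t'}(s')$.

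For the safety claim itself you do not need monotonicity at all. As you observe, step 2 never uses the inductive hypothesis, so on $\mathcal{E}$ every state that is ever certified is safe. It therefore suffices that each visited state lies in some $\mathcal X_{t'}^-$, which is what the algorithm's construction provides and all the paper invokes.
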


The proof proceeds by induction on the safe set, using Lemma~\ref{lem:bounds} and Assumption~\ref{ass:lipschitz} to certify that each newly added state satisfies $g(s) \ge h$.

Next, we prove that the algorithm converges to a near-optimal policy. This requires two steps. First, we must show that $\beta_t(s)$ can be driven below any target threshold $\bar{\beta} > L\lambda$ with a finite number of measurements. Under Assumption~\ref{ass:lipschitz}, the residual bias $L\lambda$ does not vanish with more data for a fixed bandwidth $\lambda$.

\begin{figure*}[h!]
    \centering
    \includegraphics[width=\linewidth]{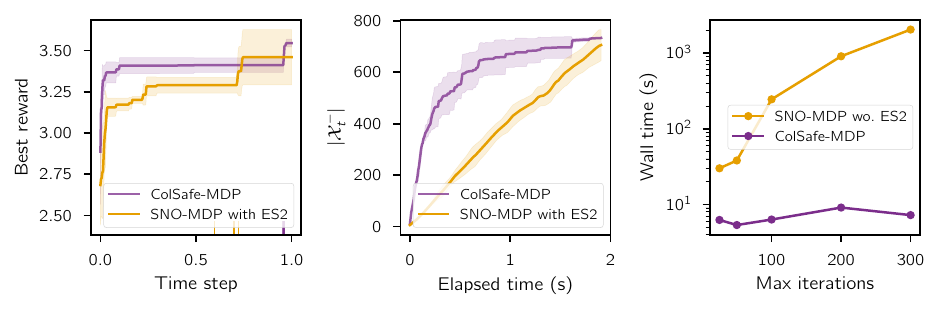}
    \vspace{-2em}
    \caption{Performance comparison of \textsc{CoLSafe-MDP} against the \textsc{SNO-MDP} baseline in the synthetic grid environment. \textit{The leftmost panel shows the best seen reward against a normalized time step count. The dashes at the bottom indicate a transition from Phase 1 to Phase 2. The middle plot shows the size of the certified safe set against wall-clock time. The rightmost plot shows, on a logarithmic scale, the computational time taken by the two algorithms. From these figures, we see that while maintaining the same performance as the baseline, \textsc{CoLSafe-MDP} expands the safe set faster at a substantially lower computational cost.}}
    \label{fig:synthetic_combined}
\end{figure*}

\begin{restatable}{lemma}{lemmatwo}
\label{lem:shrinking}
    Under Assumptions~\ref{ass:noise}--\ref{ass:kernel} and for any $\bar{\beta} > L \lambda$, there exists a finite $N_{\bar{\beta}} \in \mathbb N$, such that an accumulation of $N \ge N_{\bar{\beta}}$ local measurements $s_\tau$ satisfying $\| s - s_\tau \| \le \lambda$ guarantees $\beta_t(s) \le \bar{\beta}$.
\end{restatable}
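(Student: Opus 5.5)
Since $\beta_t(s) = L\lambda + \frac{\sigma}{\kappa_t(s)}\varepsilon_t(s)$ whenever $\kappa_t(s) > 0$, it suffices to show that the stochastic term $\frac{\sigma}{\kappa_t(s)}\varepsilon_t(s)$ can be made at most $\bar\beta - L\lambda > 0$. The plan is to bound $\kappa_t(s)$ from below and $\nu_t(s)$ from above in terms of the number $N$ of local measurements. Then $\varepsilon_t(s)/\kappa_t(s)$ is controlled by an explicit function of $N$ that decays like $\sqrt{\ln N / N}$.

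\textbf{Lower bound on $\kappa_t(s)$.} Let $N$ be the number of indices $\tau \le t$ with $\|s - s_\tau\| \le \lambda$. For each such $\tau$, Assumption~\ref{ass:kernel} gives $K_\lambda(s,s_\tau) \ge \chi_K$. All other terms are nonnegative, since $K \ge 0$ on the unit ball and $K = 0$ outside it. Hence $\kappa_t(s) \ge \chi_K N$, and in particular $\kappa_t(s) > 0$ once $N \ge 1$, so the finite branch of $\beta_t(s)$ applies.

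\textbf{Upper bound on $\nu_t(s)$.} Since $0 \le K_\lambda \le 1$, we have $K_\lambda^2 \le K_\lambda$. Therefore $\nu_t(s) \le \kappa_t(s)$, and also $\nu_t(s) \le N$, because only local terms are nonzero. I would use $\nu_t(s) \le \kappa_t(s)$, since it couples the two quantities directly and handles the case where the kernel mass is not all at the minimum. Writing $x := \kappa_t(s) \ge \chi_K N$ and using that $\sqrt{\ln(c\sqrt{1+\nu})}\sqrt{1+\nu}$ is increasing in $\nu$, with $c = 2|\mathcal S|/\delta > 1$, we obtain
\[
\frac{\sigma}{\kappa_t(s)}\varepsilon_t(s) \le \sigma\,\frac{\sqrt{2\ln\!\big(c\sqrt{1+x}\big)}\,\sqrt{1+x}}{x} =: \phi(x).
\]

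\textbf{Main step: monotone decay of $\phi$.} The remaining and main technical step is to show that $\phi(x) \to 0$ as $x \to \infty$, so that there is an $x^\ast$ with $\phi(x) \le \bar\beta - L\lambda$ for all $x \ge x^\ast$. The limit itself is clear: $\phi(x) \sim \sigma\sqrt{\ln x / x}$. The obstacle is that $\phi$ need not be monotone for small $x$, so the threshold cannot simply be taken where $\phi$ first crosses $\bar\beta - L\lambda$. I would avoid this in one of two ways. One option is to show by differentiation that $\phi$ is decreasing for $x \ge x_0$ with some explicit $x_0$. The simpler option, which I would try first, is to define $x^\ast := \sup\{x \ge 1 : \phi(x) > \bar\beta - L\lambda\}$, which is finite because $\phi \to 0$. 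Then set $N_{\bar\beta} := \lceil \max(x^\ast, 1)/\chi_K \rceil$. For $N \ge N_{\bar\beta}$ we have $\kappa_t(s) \ge \chi_K N \ge x^\ast$, so $\beta_t(s) \le L\lambda + \phi(\kappa_t(s)) \le \bar\beta$. This threshold depends only on $\sigma, \delta, |\mathcal S|, \chi_K, L, \lambda, \bar\beta$ and is independent of $s$ and $t$.
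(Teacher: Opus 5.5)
Your proposal is correct and follows essentially the same route as the paper: bound $\kappa_t(s) \ge \chi_K N$ from below and $\nu_t(s)$ from above, note that the noise term vanishes like $\sqrt{\ln N / N}$, and use the margin $\bar\beta - L\lambda > 0$ to extract a finite $N_{\bar\beta}$. The only deviation is that you use $\nu_t(s) \le \kappa_t(s)$ (from $K_\lambda^2 \le K_\lambda$) instead of the paper's $\nu_t(s) \le N$, which gives a one-variable bound in $\kappa_t(s)$ with a somewhat better dependence on $\chi_K$; your supremum construction is just the limit definition made explicit, and the monotonicity concern is moot anyway, since $\phi$ is decreasing on $(0,\infty)$ whenever $\ln c \ge 1/2$, which holds because $c > 2$.
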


Intuitively, the noise contribution to $\beta_t(s)$ scales as $\tilde{\mathcal{O}}(1 / \sqrt{N})$ and vanishes with increasing data, while the bias term $L\lambda$ remains constant. Thus, for any margin $\bar{\beta} > L\lambda$, we prove that sufficiently many local observations will drive the total bound below $\bar{\beta}$. We use this result on shrinking uncertainty to show an upper-bound on the exploration phase.

\begin{restatable}{corollary}{corollaryone}
\label{cor:bounded_exploration}
    Under Assumptions~\ref{ass:noise}--\ref{ass:kernel}, a chosen $\bar{\beta} > L \lambda$, and $N_{\bar{\beta}} \in \mathbb N$ as in Lemma~\ref{lem:shrinking}, the safety exploration phase terminates after at most $t^\ast \le D   |\mathcal S| N_{\bar{\beta}}$ environment transitions, where $D$ is the maximum shortest-path distance between any two states in $\mathcal{X}_t^-$ over all $t$. 
\end{restatable}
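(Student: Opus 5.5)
The plan is a counting argument in three steps: bound the number of targets that can be selected before every state is resolved, bound the cost of each trip between targets by $D$, and multiply.

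\emph{Step 1: once a target is resolved, it is never chosen again.} Consider the stopping rule $\max_{s\in\mathcal X_t^-} w_t(s)\le 2\bar\beta$. The width satisfies $w_t(s)=u_t(s)-l_t(s)\le 2\beta_t(s)$, because $C_t(s)\subseteq Q_t(s)$. Hence a state $s$ can be the argmax target with $w_t(s)>2\bar\beta$ only while $\beta_t(s)>\bar\beta$. By Lemma~\ref{lem:shrinking}, this is only possible while $s$ has fewer than $N_{\bar\beta}$ local measurements, i.e.\ points $s_\tau$ with $\|s-s_\tau\|\le\lambda$. Each visit to the target $\xi$ itself produces one such local measurement, since $\|\xi-\xi\|=0\le\lambda$. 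The intersection $C_t(s)=Q_t(s)\cap C_{t-1}(s)$ makes $w_t(s)$ nonincreasing, so once $w_t(s)\le 2\bar\beta$ it stays there.

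\emph{Step 2: bound the number of target selections.} Every selection is a state $\xi$ with $w_t(\xi)>2\bar\beta$. Otherwise the maximum over $\mathcal T_t$ is at most $2\bar\beta$. In the branch $\mathcal T_t=\mathcal X_t^-$ this already triggers termination. In the expander branch, I would argue that a target with width at most $2\bar\beta$ means all expanders are resolved, so the loop is effectively in the $\mathcal X_t^-$ regime. Each state can therefore be selected at most $N_{\bar\beta}$ times before its width drops below $2\bar\beta$. The set of candidate states is contained in $\mathcal S$, and the safe sets are monotone, which follows from the intersection of the $C_t$. The total number of target selections in Phase~1 is therefore at most $|\mathcal S|N_{\bar\beta}$.

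\emph{Step 3: bound the cost of each trip.} Each trip from the current state to $\xi$ runs along a shortest path inside $\mathcal X_t^-$. By the definition of $D$ it uses at most $D$ transitions. Multiplying gives $t^\ast\le D|\mathcal S|N_{\bar\beta}$. Intermediate measurements taken along the path only add data, so they can only shrink the $\beta_t$.

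\emph{Expected obstacle.} The delicate point is the expander branch in Step~2. The argmax is taken over $G_t$, not over all of $\mathcal X_t^-$, so the algorithm might choose an expander with small width while some other state in $\mathcal X_t^-$ still has large width. In that case the selection would not advance any state toward resolution, and Step~2 would not count it. My first attempt is to note that such a selection is still charged to one of at most $|\mathcal S|$ states: either it reduces that state's remaining budget of $N_{\bar\beta}$, or it leaves the sets unchanged. In the latter case, the next iteration with $G_t$ empty, or with every expander resolved, falls back to $\mathcal X_t^-$, which is counted. If this does not close, I would interpret the selection rule as picking among targets with $w_t>2\bar\beta$, and count only productive visits under that reading.
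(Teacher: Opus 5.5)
Your three steps are the paper's argument: a state is selected at most $N_{\bar\beta}$ times while its width exceeds $2\bar\beta$, so there are at most $|\mathcal S|N_{\bar\beta}$ selections, each costing at most $D$ transitions. The obstacle you flag in the expander branch is a genuine gap. The paper's proof does not resolve it either, since it infers $w_t(\xi_t)>2\bar\beta$ directly from non-termination, which is valid only when $\mathcal T_t=\mathcal X_t^-$.

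Neither of your patches closes the gap for Algorithm~\ref{alg:algorithm} as written. The rule sets $\mathcal T_t=\mathcal G_t$ whenever $\mathcal G_t\neq\emptyset$, regardless of widths. So ``every expander is resolved'' does not put the loop into the $\mathcal X_t^-$ regime. In your charging argument, the budget of $N_{\bar\beta}$ only limits selections of unresolved states. A resolved expander can still be selected, and if the sets do not change then $\mathcal G_{t+1}=\mathcal G_t\neq\emptyset$. There is then no fallback, and the same uncharged selection can repeat indefinitely.

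Nor does $\mathcal G_t$ have to empty eventually, because the bias $L\lambda$ in $\beta_t$ never vanishes. Suppose, for example, the noise happens to be zero, which Assumption~\ref{ass:noise} allows.
\begin{itemize}
\item Every measurement is taken at a safe state (Theorem~\ref{thm:theorem1}). So each $\hat m_{1,t}$ is a convex combination of values at least $h$, and $u_t(e)\ge h+L\lambda$ for every $e$.
\item Take any unsafe $z$ reachable in one step from a state $e$ of the pessimistic set, with $\|z-e\|\le\lambda$ and a step back to $e$. From then on $z$ lies in $\mathcal X_t^+\setminus\mathcal X_t^-$, so $e\in\mathcal G_t$ forever.
\item Any state of $\mathcal X_t^-$ with no measurement within distance $\lambda$ has $\kappa_t=0$ and $w_t=+\infty$. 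Examples are an unvisited seed state, or a state certified through the Lipschitz condition from afar.
\item If such a state is not an expander, and no path between expanders passes within $\lambda$ of it, the stopping test never fires.
\end{itemize}
So the bound cannot be derived for the loop as literally written.

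Your second reading is the right repair, but you must state it as a change to the algorithm: take $\mathcal T_t=\{s\in\mathcal G_t : w_t(s)>2\bar\beta\}$ when this set is nonempty, and $\mathcal X_t^-$ otherwise. Then non-termination forces $w_t(\xi_t)>2\bar\beta$ at every selection, and your Steps 1--3 (equivalently, the paper's proof) go through verbatim.

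One smaller inaccuracy: measurements gathered along the path need not shrink $\beta_t$ for a non-box kernel. For instance, twenty points of weight $1/4$ followed by one of weight $1$ raise $\sqrt{1+\nu_t}/\kappa_t$ from $0.3$ to about $0.3005$. You never need this claim, since the count-based bound of Lemma~\ref{lem:shrinking} and the monotonicity of $w_t$ suffice.
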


We show that targeting maximum uncertainty forces it to drop below $2 \bar{\beta}$ in at most $|\mathcal{S}|N_{\bar{\beta}}$ measurements, requiring no more than $D |\mathcal{S}|N_{\bar{\beta}}$ transitions. With the exploration phase now finite, we prove the near-optimality of the resulting policy.

\begin{restatable}{theorem}{theoremtwo}
\label{thm:theorem2}
    For a chosen $\bar{\beta} > L \lambda$, by Corollary~\ref{cor:bounded_exploration}, the exploration phase terminates in $t^\ast \le D | \mathcal S | N_{\bar{\beta}}$ steps. With probability at least $1 - \delta$ and a policy $\pi_t$ executed for all $t \ge t^\ast$,
    \begin{align}
        V^{\pi_t}(s_t) \ge V^\ast (s_t) - \epsilon,
    \end{align}
    where $\epsilon = \frac{2 \bar{\beta}}{1 - \gamma}$ and $V^\ast(s)$ is the value function corresponding to the true optimal policy restricted to the safe space.
\end{restatable}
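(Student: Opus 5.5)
We would prove Theorem~\ref{thm:theorem2} with the standard optimism argument for optimistic value iteration, working on the high-probability event $\mathcal E$ of Lemma~\ref{lem:bounds}. That event has probability at least $1-\delta$. On it, $|\hat m_0(s)-r(s)|\le\beta_t(s)$ for all $s$ and $t$, so $u_t^r(s)\ge r(s)$. Because $r\le R_{\max}$, the clipped optimistic reward satisfies $r(f(s,a))\le U_t(s,a)$ for every pair $(s,a)$ with $f(s,a)\in\mathcal X_t^-$. By Theorem~\ref{thm:theorem1}, the same event also keeps every state visited by the policy safe. We therefore prove the value bound on $\mathcal E$ only, and the probability statement follows directly.

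\textbf{Step 1: bound the reward gap.} At $t^\ast$, Corollary~\ref{cor:bounded_exploration} gives $\max_{s\in\mathcal X_t^-}w_t(s)\le 2\bar\beta$. We want to transfer this to the reward interval: for $s'\in\mathcal X_t^-$,
\[
0\le U_t(s,a)-r(s')\le 2\beta_t(s')\le 2\bar\beta .
\]
The issue is that $w_t$ measures the intersected safety interval $C_t$, not $\beta_t$ itself. Since both components of $m$ share the same $\beta_t(s)$, we would argue as follows. Phase~1 ends only after every state of $\mathcal X_t^-$ has received enough local measurements, in the sense of Lemma~\ref{lem:shrinking}, that $\beta_t\le\bar\beta$. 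Equivalently, we would read the termination criterion as bounding the unintersected width $2\beta_t$. We expect this to be the main obstacle. The intersection $C_t$ can make $w_t$ small while $\beta_t$ stays larger; seed states with $C_0=[h,\infty)$ are one example. If the direct argument fails, we would invoke Lemma~\ref{lem:shrinking} together with the counting argument in the proof of Corollary~\ref{cor:bounded_exploration} to conclude that each state in $\mathcal X_t^-$ has $N_{\bar\beta}$ local samples. Alternatively, we would note that the upper bound $u_t^r$ may itself be intersected and monotone in $t$, so that it remains within $2\bar\beta$ of $r$. Since $\beta_t$ is nonincreasing once the sample counts grow, the bound persists for all $t\ge t^\ast$.

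\textbf{Step 2: value comparison.} Let $J^\ast$ be the optimal value for the optimistic MDP with reward $U_t$ on $\mathcal X_t^-$, and let $\pi_t$ be greedy with respect to it. Let $V^\ast$ be the true optimal value restricted to the safe space, attained by some policy $\pi^\ast$ that stays in $\mathcal X_t^-$. Optimism gives $J^\ast\ge J^{\pi^\ast}\ge V^\ast$, because $U_t\ge r$ pointwise and the dynamics $f$ are known and identical. Next, $J^\ast=J^{\pi_t}$. A simulation-lemma or telescoping argument, using the same deterministic transitions, gives
\[
J^{\pi_t}(s)-V^{\pi_t}(s)=\sum_{\tau\ge0}\gamma^\tau\bigl(U_t(s_\tau,a_\tau)-r(s_{\tau+1})\bigr)\le \frac{2\bar\beta}{1-\gamma}.
\]
Chaining the three relations yields
\[
V^{\pi_t}(s_t)\ge J^\ast(s_t)-\epsilon\ge V^\ast(s_t)-\epsilon .
\]

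\textbf{Step 3: indexing and time-variation.} We would handle the one-step shift between collecting the reward $r(s_{t+\tau})$ and indexing it as $U_t(s,a)$ at the successor state; it only relabels the sum. We would also treat $\pi_t$ as a fixed stationary policy for each $t$, since $U_t$ only tightens over time, so each $\pi_t$ satisfies the bound individually. Safety of executing $\pi_t$ follows from Theorem~\ref{thm:theorem1}, because $\pi_t$ only transitions within $\mathcal X_t^-$.
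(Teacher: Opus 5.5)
\textbf{Verdict: genuine gap at Step~1, shared with the paper's own proof.} Your skeleton is the paper's. On the event $\mathcal E$ of Lemma~\ref{lem:bounds} you bound $U_t(s,a)-r(f(s,a))$ by $2\beta_t(f(s,a))$. You then chain optimism, $J^\ast\ge J^{\pi^\ast}\ge V^\ast$ (the paper cites the \textsc{SNO-MDP} optimism lemma), with a simulation-lemma bound $J^{\pi_t}-V^{\pi_t}\le 2\bar\beta/(1-\gamma)$ (the paper cites Strehl and Littman). You are more careful than the paper about the clipping at $R_{\max}$ and the one-step index shift.

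Your diagnosis of Step~1 is correct. The stopping rule bounds $w_t(s)=u_t(s)-l_t(s)$, which is the width of the intersected \emph{safety} interval $C_t(s)$. The only general relation is $w_t(s)\le 2\beta_t(s)$, and that points the wrong way. The paper's proof asserts $2\beta_t(s')\le 2\bar\beta$ from $w_t\le 2\bar\beta$ without justification, implicitly treating $w_t$ as $2\beta_t$. So you are not missing an idea the paper supplies, but none of your repairs closes the gap for Algorithm~1 as written.

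\textbf{Why each repair fails.}
\begin{itemize}
\item Reading the termination test as a bound on $2\beta_t$ is not an equivalence. It is a change to the algorithm.
\item The counting argument in the proof of Corollary~\ref{cor:bounded_exploration} only upper-bounds how often a state can be \emph{targeted}. It gives no lower bound on the local count $n_t(s)$ of states in $\mathcal X_t^-$ at termination, because termination is triggered by $w_t$. On $\mathcal E$ one can have $w_t(s)\le 2\bar\beta$ while $n_t(s)<N_{\bar\beta}$ and $\beta_t(s)\gg\bar\beta$. Two examples: a seed state, where $l_t\ge h$ comes from $C_0$; or a state whose early intervals $Q_\tau(s)$ and $Q_{\tau'}(s)$ overlap only in a thin sliver. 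Lemma~\ref{lem:shrinking} says only that many samples imply small $\beta_t$; it cannot be inverted.
\item Intersecting the reward intervals does not bridge the gap. They are centred at $\hat m_0$, not $\hat m_1$, so a small safety width says nothing about them. You would still need $\beta_\tau(s')\le\bar\beta$ for some $\tau\le t$.
\end{itemize}
Without such a bound, the only control available is $U_t-r\le R_{\max}$, which makes $\epsilon$ vacuous.

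\textbf{What would work.} Change the phase-1 rules to use $\beta_t$ (or local counts) instead of $w_t$. Target only states with $\beta_t(s)>\bar\beta$, and stop when $\max_{s\in\mathcal X_t^-}\beta_t(s)\le\bar\beta$. Then every target has $n_t<N_{\bar\beta}$, the count of Corollary~\ref{cor:bounded_exploration} still gives at most $|\mathcal S|N_{\bar\beta}$ selections, and your Steps~2--3 go through.

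In that modified version, your intersection idea is the right tool for persistence after $t^\ast$. With $u^r_t=\min_{\tau\le t}(\hat m_{0,\tau}+\beta_\tau)$ you get $r\le u^r_t\le r+2\beta_{t^\ast}$ for all $t\ge t^\ast$ on $\mathcal E$. Your alternative claim that $\beta_t$ is nonincreasing can fail for non-box kernels. For example, adding one weight-$1$ sample to many weight-$\chi_K$ samples can increase $\sqrt{1+\nu_t}/\kappa_t$, and with it $\beta_t$.
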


\begin{remark}
\label{rem:convergence}
The bound $t^\ast \le D  |\mathcal{S}| N_{\bar{\beta}}$ is a worst-case estimate assuming each state requires $N_{\bar{\beta}}$ independent visits. In practice, convergence is faster because each observation also reduces uncertainty at neighbouring states within the kernel's support.
\end{remark}

\section{Evaluation}
\label{sec:evaluation}

We evaluate the performance of \textsc{CoLSafe-MDP} against the baseline \textsc{SNO-MDP} from \cite{wachi2020SafeReinforcementLearning} in two settings. First, we use a synthetic grid-environment to study \textsc{CoLSafe-MDP}'s reward gain, explorativeness, and computational efficiency. Second, we conduct simulations over data from Martian digital imagery to assign safety function values based on the terrain slope.

\subsection{Synthetic environment}
\label{sec:synthetic-environment}

Our synthetic environment, identical to~\cite{wachi2020SafeReinforcementLearning}, is a discrete grid-world MDP framework consistent with Section~\ref{sec:problem}. The state space $\mathcal{S}$ consists of the cells of a two-dimensional rectangular grid, where each cell represents a distinct location the agent can occupy. The action space $\mathcal{A} = \{\text{\textit{stay, up, right, down, left}} \}$ contains five deterministic actions, with transitions that move the agent to the adjacent grid cell in the chosen direction or keep it in place. Each state $s \in \mathcal{S}$ is associated with a scalar reward value $r(s)$ and a scalar safety value $g(s)$, both of which are unknown to the agent and must be estimated online from noisy observations. 

In our experiments, we use a $20 \times 20$ square grid where the reward and safety values for each cell were randomly generated. For \textsc{CoLSafe-MDP}, we chose a box kernel with bandwidth $\lambda = 1$, which corresponds to a compact kernel that weighs two neighbouring states in each direction equally. We set $L = 0.08$, $\sigma = 0.04$, and $\delta = 0.05$. For the baseline algorithm, we chose a Gaussian kernel with length scale $\ell=2$ for the GP that infers reward and $\ell=1$ for the GP that infers safety. We set the confidence interval parameters to $\beta_t = 4.1$\footnote{Here, $\beta_t$ is from \cite{wachi2020SafeReinforcementLearning}, not the bound from Lemma~\ref{lem:bounds}. To successfully reproduce the results of \textsc{SNO-MDP}, we had to choose hyperparameters different from those reported in \cite{wachi2020SafeReinforcementLearning}.}. For both methods, we set the discount factor $\gamma = 0.99$. 

The baseline method has an early-stopping condition (ES2 in~\cite{wachi2020SafeReinforcementLearning}) where the safe set expansion is terminated if the optimal policy for the MDP heads for the inside of $\mathcal X_t^-$. Figure~\ref{fig:synthetic_combined} summarizes this comparison. We observe, in the rightmost panel, that our computational improvement over the baseline is substantial. The wall-clock time stays nearly flat as the iteration count grows from 50 to 300 while the GP-baseline grows with $\mathcal{O}(n^3)$. The middle panel shows that although we require more iterations, \textsc{CoLSafe-MDP} discovers safe states faster than \textsc{SNO-MDP} with ES2. Finally, the leftmost panel shows that this comes at no cost to the learnt policy; both methods reach comparable best-seen reward. The corresponding reward and safety exploration on the environment is visualized in Figure~\ref{fig:heatmaps}. There were no safety violations for both \textsc{CoLSafe-MDP} and the baseline algorithm. 

\subsection{Mars surface exploration}
\label{sec:mars}

\begin{wrapfigure}{R}{0.25\textwidth}
    \centering
    \includegraphics[width=0.24\textwidth]{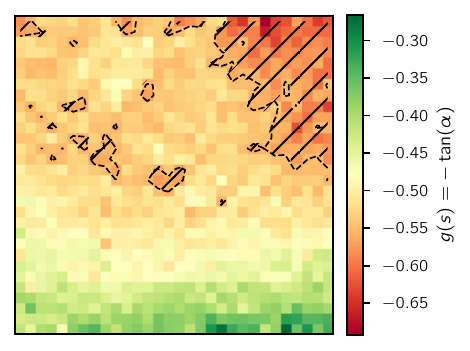}
    \caption{Safety heatmap of the Martian $40 \times 40$-grid considered in our experiments. \textit{Unsafe regions are marked with a dashed line.}}
    \label{fig:mars_heatmap}
\end{wrapfigure}

In this setup, we consider the example of a planetary rover exploring an unknown environment in pursuit of visiting different states to collect samples while avoiding unsafe terrain. Here, we used publicly available data from the High-resolution Imaging Science (HiRISE) camera that was part of the Mars Reconnaissance Orbiter \cite{https://doi.org/10.1029/2005JE002605}. More specifically, we considered a $40 \times 40$ grid from a digital terrain model around a crater-impacted hydrothermal zone located at roughly \SI{31}{\degree}S and \SI{202}{\degree}E\footnote{Link: \href{https://www.uahirise.org/dtm/dtm.php?ID=PSP_010228_1490}{Stratigraphy of Potential Crater Hydrothermal System}, HiRISE, Lunar and Planetary Lab, University of Arizona.}. 

The safety of each state is defined by its slope $\alpha$, computed as $g(s) = -\tan \alpha$. We selected the threshold $h$ based on $\alpha = \SI{29}{\degree}$. This is depicted in Figure~\ref{fig:mars_heatmap}, where the unsafe states comprise one-sixth of the grid. We generated the rewards randomly and set the hyperparameters identically to the experiments in Section~\ref{sec:synthetic-environment}. This setup is in line with~\cite{wachi2020SafeReinforcementLearning}. 

The results from these experiments are presented in Table~\ref{tab:results}. \textsc{CoLSafe-MDP} certifies nearly three times as many safe states against the baseline. Consequently, with more of the terrain available to safely exploit, the learnt policy attains higher mean reward. The computational gap is widest during exploration, where \textsc{CoLSafe-MDP} completes the phase in \SI{0.79}{\second} against \SI{10.73}{\second}, and remains substantial in the exploitation phase. We see that \textsc{CoLSafe-MDP} discovers nearly three times as many safe states whilst achieving higher reward, zero violations, and at faster runtime. Notably, the baseline incurs one safety constraint violation while expanding its safe set, whereas \textsc{CoLSafe-MDP} incurs none in either phase. These results are consistent with the synthetic experiments and further confirm that \textsc{CoLSafe-MDP} scales favourably to the case of a rover navigating an unknown planetary landscape.

\begin{table}[h!]
    \centering
    \caption[Mars terrain experiment results.]{Mars terrain experiment results.\\
      \begin{minipage}{\columnwidth}
      \vspace{0.6em}
      \normalfont\itshape\fontsize{8}{8.2}\selectfont
      For each method, the first row reports the exploration phase and the second the exploitation phase. $|\mathcal{X}^-|$ is the number of certified safe states, $\bar{r}$ the mean reward of the learnt policy, `Viol.' the number of safety constraint violations, and `Time' the wall-clock time in seconds. 
      \vspace{-0.5em}
      \end{minipage}}
    \label{tab:results}
    \footnotesize
    \setlength{\tabcolsep}{5pt}
    \begin{tabular}{lcccc}
        \toprule
        \multicolumn{1}{c}{Method} & $|\mathcal{X}^-|$ & $\bar{r}$ & Viol. & Time (s) \\
        \midrule
        \multirow{2}{*}{\textsc{CoLSafe-MDP}} & -- & \textbf{0.561} & \textbf{0} & \textbf{0.79} \\
                                      & \textbf{1979} & \textbf{0.981} & 0 & \textbf{35.04} \\
        \midrule
        \multirow{2}{*}{\textsc{SNO-MDP-ES2} \cite{wachi2020SafeReinforcementLearning}} & -- & 0.556 & 1 & 10.73 \\
                                      & 674 & 0.567 & 0 & 75.85 \\
        \bottomrule
    \end{tabular}
\end{table}


\begin{figure}[h!]
    \centering
    \includegraphics[width=\linewidth]{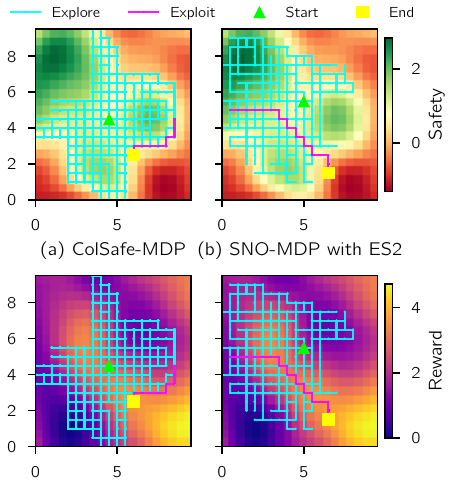}
    \vspace{-2em}
    \caption{Visualizations of the path taken by the agent on the environment, overlaid with reward and safety function values. Top: safety $g(s)$; bottom: reward $r(s)$. \textit{The paths indicate the extent of exploration during the training phase, any potential safety violations, and the final state. The agent visits more safe states with our algorithm, and both methods converge to a similar final position that maximizes reward within safe limits.}}
    \label{fig:heatmaps}
\end{figure}

\section{Conclusion}

In this paper, we present a safe exploration strategy in reinforcement learning that outperforms its counterparts on performance, computational efficiency, and explorativeness under an arguably milder set of assumptions. We adopt the principles behind the \textsc{SNO-MDP} algorithm~\cite{wachi2020SafeReinforcementLearning} and replace the GP estimates with the Nadaraya-Watson estimator. We then update these estimates in $\mathcal O(1)$ time recursively---a significant improvement over the $\mathcal O(n^3)$ computational complexity of GPs. To facilitate the usage of the new estimator in a safe learning setting, we derive safety guarantees on the estimate and near-optimality guarantees on the derived policy that hold with high probability. This overall improvement comes at the cost of sample efficiency. Although outbalanced by the computational improvement, \textsc{CoLSafe-MDP} has a more conservative safe-set expansion strategy that consumes more iterations. 

For future work, it would be relevant to consider extensions that further tighten our bounds, such as by using local Lipschitz constants to reduce the estimator's bias, or by using an adaptive bandwidth parameter $\lambda$. We also inherit the same set of assumptions as baseline methods~\cite{sui2015SafeExplorationOptimization,pmlr-v70-chowdhury17a}. Future extensions could weaken these assumptions. For example, approaches to logically constrained decision making use scenario optimization, which requires fewer assumptions on function smoothness and noise distributions~\cite{tokmak2026safelearningbasedcontrolfunctionbased}. Other extensions would include expanding efficiency and efficacy to higher dimensional settings. 

\section{Proofs}
\label{sec:proofs}

In this section, we prove the safety and optimality guarantees: Lemma~\ref{lem:bounds}, Theorem~\ref{thm:theorem1}, Lemma~\ref{lem:shrinking}, Corollary~\ref{cor:bounded_exploration}, and Theorem~\ref{thm:theorem2}. 

\subsection{Proof of Lemma~\ref{lem:bounds}}

\begin{proof}
We build upon the point-wise Nadaraya-Watson martingale bounds derived in \cite{baumann2025SafetyOptimalityLearningBased}. By Assumption~\ref{ass:noise}, the super-martingale inequality \cite[Cor.~2, Lem.~1]{baumann2025SafetyOptimalityLearningBased} yields that for any fixed state $s \in \mathcal S$, fixed component $i \in \{0, 1\}$, and failure probability $\delta' \in (0, 1)$, the following bound holds uniformly over all $t \ge 1$ with probability at least $1 - \delta'$:
\begin{align}
    \label{eq:pointwise-bound}
    & |\hat{m}_{i,t}(s) - m_i(s)| \\ \nonumber & \le L\lambda + \frac{\sigma}{\kappa_t(s)} \sqrt{2 \ln\left(\frac{1}{\delta'} \sqrt{1 + \nu_t(s)}\right)} \sqrt{1 + \nu_t(s)}.
\end{align}

Now, let $E_{s,i}$ denote the event that the point-wise confidence bound fails at state $s$ for component $i$. Applying Boole's inequality, we have,
\begin{align*}
    \mathbb{P}\left(\bigcup_{s \in \mathcal{S}} \bigcup_{i \in \{0,1\}} E_{s,i}\right) \le \sum_{s \in \mathcal{S}} \sum_{i \in \{0,1\}} \mathbb{P}(E_{s,i}).
\end{align*}
We assign a uniform pointwise failure probability $\delta ' \coloneqq \frac{\delta}{2|\mathcal S|}$ to each pair $(s, i)$. Substituting $\delta'$ into the logarithmic term of \eqref{eq:pointwise-bound}, we obtain
\begin{align*}
    \ln \Big( \frac{1}{\delta '} \sqrt{1 + \nu_t(s)}\Big)  = \ln \Big( \frac{2|S|}{\delta}  \sqrt{1 + \nu_t(s)}\Big).
\end{align*}
Substituting this into \eqref{eq:pointwise-bound} yields the exact definition of $\beta_t(s)$ from Lemma~\ref{lem:bounds}, concluding the proof that $|\hat{m}_i(s) - m_i(s)| \le \beta_t(s)$ holds jointly for all $i \in \{0, 1\}$ and all $s \in \mathcal S$ with probability at least $1 - \delta$. 
\end{proof}

\subsection{Proof of Theorem~\ref{thm:theorem1}}
\begin{proof}
    Let $\mathcal{E}$ denote the event on which $| \hat{m}_i(s) - m_i(s) | \le \beta_t(s)$ holds simultaneously for all $i \in \{0,1\}$, $t \ge 1$, and $s \in \mathcal S$. By Lemma~\ref{lem:bounds}, $\mathbb P(\mathcal E) \ge 1 - \delta$. We show that on $\mathcal E$, safety holds for all $t \ge 0$ by induction.

    By Assumption~\ref{ass:safe-set}, the initial set $S_0$ is safe, with $g(s) \ge h$ for all $s \in \mathcal X_0^-$. For the inductive step, assume that all states $ s \in \mathcal X_{t-1}^-$ are safe. At step $t$, any new certified safe state $s \in \mathcal X_t ^- \setminus \mathcal X_{t-1}^-$ must satisfy the pessimistic expansion condition from~\eqref{eq:pessimistic_safeset_expansion}:
    \begin{align*}
        l_t(s') - L \| s - s' \| \ge h.
    \end{align*}
    On $E$, $g(s') \in Q_\tau(s')$ for every $\tau \ge 1$. Since $C_t(s') = Q_t(s') \cap C_{t-1}(s')$, it follows that $g(s') \in C_t(s')$ for all $t$, and hence $g(s') \ge l_t(s') := \min C_t(s')$. Then, by Assumption~\ref{ass:lipschitz},
    \begin{align*}
        g(s) \ge g(s') - L \| s - s'\| \ge l_t(s') - L \| s - s' \| \ge h.
    \end{align*}
    Thus, on $E$, $\mathcal X_t^-$ contains exclusively safe states for every $t$. Since the agent's transitions are restricted to $\mathcal X_{t}^-$, we conclude that $g(s_t) \ge h$ for all $t \ge 0$ jointly with probability at least $1 - \delta$.
\end{proof}

\subsection{Proof of Lemma~\ref{lem:shrinking}}

\begin{proof}
From Lemma~\ref{lem:bounds} we have bounds as
\begin{align}
    \beta_t(s) = L\lambda + \frac{\sigma}{\kappa_t(s)} \sqrt{2 \ln\left(\frac{2|\mathcal{S}|}{\delta} \sqrt{1 + \nu_t(s)}\right)} \sqrt{1 + \nu_t(s)}. \nonumber
\end{align}
Suppose we have $N$ local measurements within the bandwidth $\lambda$ of state $s \in \mathcal S$. By Assumption~\ref{ass:kernel}, the kernel is bounded such that $0 < \chi_K \le K_\lambda(\cdot) \le 1$. Consequently, $\kappa_t(s) \ge N \chi_K$, and $\nu_t(s) = \sum K_\lambda^2(\cdot) \le N\cdot(1)^2 = N$. 

Substituting these limits, we obtain a conservative upper bound function:
\begin{align*}
    \beta_t(s) \le \underbrace{L\lambda + \frac{\sigma \sqrt{1+N}}{N \chi_K} \sqrt{2 \ln\left(\frac{2|\mathcal{S}|}{\delta} \sqrt{1 + N}\right)}}_{\coloneqq f(N)}.
\end{align*}

As $N \to \infty$, the data-dependent noise term scales as $\tilde{\mathcal{O}}(\sqrt{N \ln N}/N)$. Because this term vanishes asymptotically, the bound tightly converges to the inherent estimator bias: $\lim_{N \to \infty} f(N) = L\lambda$.

By our hypothesis we choose $\bar{\beta} > L \lambda$. By the formal definition of a limit, there must exist a finite integer $N_{\bar{\beta}}$ such that for all sample counts $N \ge N_{\bar{\beta}}$, the estimation noise evaluates to strictly less than or equal to the margin $\bar{\beta} - L\lambda$. Thus, for any $N \ge N_{\bar{\beta}}$, $\beta_t(s) \le f(N) \le \bar{\beta}$, thereby concluding the proof.
\end{proof}

\subsection{Proof of Corollary~\ref{cor:bounded_exploration}}

\begin{proof}
Let $n_t(s)$ be the number of local measurements at state $s$ up to time $t$. By Lemma~\ref{lem:shrinking}, $n_t(s) \ge N_{\bar\beta}$ implies $w_t(s) \le 2\bar\beta$.

At each selection the agent picks $\xi_t = \arg\max_{s \in \mathcal T_t} w_t(s)$, where $\mathcal T_t = \mathcal G_t$ if $\mathcal G_t \neq \emptyset$ and $\mathcal T_t = \mathcal X_t^-$ otherwise. Suppose exploration is still running after $T = |\mathcal S| N_{\bar\beta} + 1$ target selections. Since it has not terminated, $w_t(\xi_t) > 2\bar\beta$ for every $t \le T$, so by the contrapositive of Lemma~\ref{lem:shrinking}, $n_t(\xi_t) < N_{\bar\beta}$ at every selection.

Each selection of a target $\xi_t$ yields at least one measurement at $\xi_t$, so $n_t(s)$ increases by at least one every time $s$ is targeted. Consequently, any state can be targeted at most $N_{\bar\beta}$ times, after which its count satisfies $n_t(s) \ge N_{\bar\beta}$, and it can no longer be selected. With $|\mathcal S|$ states, this bounds the total number of selections by $|\mathcal S| N_{\bar\beta}$, contradicting the assumption that $T = |\mathcal S| N_{\bar\beta} + 1$ selections occurred. Hence exploration terminates within $|\mathcal S| N_{\bar\beta}$ target selections.



If $D$ is the largest shortest-path distance between any two states in $\mathcal X_t^-$, each target selection costs at most $D$ transitions, so exploration terminates after at most $t^\ast \le D |\mathcal S| N_{\bar\beta}$ state transitions.
\end{proof}

\subsection{Proof of Theorem~\ref{thm:theorem2}}

\begin{proof}
%
    %
    When $t \ge t^\ast$ (i.e., the second phase of the algorithm), we operate exclusively over $\mathcal X_t^-$ where the uncertainty $w_t(s) \le 2 \bar{\beta}$. For a deterministic state transition $s' = f(s, a)$, our optimistic reward $U_t(s, a) = u_t^r(s')$ satisfies, by Lemma~\ref{lem:bounds} applied to $m_0$,
    \begin{align}
        U_t(s, a) - r(s') \le 2\beta_t(s') \le 2\bar{\beta}.
    \end{align}

    Let $V_{U_t}^\pi$ denote the value of a policy $\pi$ under the optimistic rewards. By the generalized induced inequality in \cite[Lem.~5]{strehl_analysis_2008}, the value difference for any policy in $\mathcal X_t^-$ is bounded by 
    \begin{align}
        V_{U_t}^{\pi}(s) - V^{\pi}(s) & \le \frac{\max_{s,a} |U_t(s,a) - r(f(s, a))|}{1-\gamma} \nonumber \\ &\le \frac{2\bar{\beta}}{1-\gamma}. \label{eq:value-diff}
    \end{align}
    Let $\pi^*$ be the true optimal safe policy restricted to the safe and reachable states, and $\pi_t$ be the agent's greedy optimistic policy. By Lemma 4 in \cite{wachi2020SafeReinforcementLearning}, the optimistic value of $\pi_t$ upper bounds the true optimal value as
    \begin{align}
        V_{U_t}^{\pi_t}(s) \ge V_{U_t}^{\pi^*}(s) \ge V^{\pi^*}(s) = V^*(s). \label{eq:optimism}
    \end{align}
    Substituting~\eqref{eq:value-diff} in~\eqref{eq:optimism} gives us
    \begin{align}
        V^{\pi_t}(s) \ge V_{U_t}^{\pi_t} (s) - \epsilon \ge V^\ast (s) - \epsilon,
    \end{align}
    where $\epsilon = \frac{2 \bar{\beta}}{1 - \gamma}$.
    This proves that the policy executed for all $t \ge t^\ast$ is $\epsilon$-optimal. 
\end{proof}

\bibliographystyle{IEEEtran}
\bibliography{ref}

\end{document}